\newcommand\cut[1]{}
\newcolumntype{C}[1]{>{\centering\arraybackslash}m{#1}}
\newcolumntype{R}[1]{>{\raggedleft\arraybackslash}m{#1}}
\newcommand{\be}{\begin{equation}}
\newcommand{\ee}{\end{equation}}
\newcommand{\bea}{\begin{eqnarray}}
\newcommand{\eea}{\end{eqnarray}}
\newcommand{\beaa}{\begin{eqnarray*}}
\newcommand{\eeaa}{\end{eqnarray*}}
\DeclareMathAlphabet{\mathpzc}{OT1}{pzc}{m}{n}
\newcommand{\mycomment}[3]{{\textcolor{#3}{[#1 #2]}}}
\newcommand{\drmarker}{{\textcolor{red}{\ensuremath{^{\textsc{D}}_{\textsc{R}}}}}}
\newcommand{\gpmarker}{{\textcolor{purple}{\ensuremath{^{\textsc{G}}_{\textsc{P}}}}}}
\newcommand{\ihmarker}{{\textcolor{blue}{\ensuremath{^{\textsc{I}}_{\textsc{H}}}}}}
 \newcommand{\dr}[1]{\mycomment{\drmarker}{#1}{red}}
 \newcommand{\gp}[1]{\mycomment{\gpmarker}{#1}{purple}} 
 \newcommand{\ih}[1]{\mycomment{\ihmarker}{#1}{blue}}
 \newcommand{\dr}[1]{}
 \newcommand{\gp}[1]{}
 \newcommand{\ih}[1]{} 
\newtheorem{lemma}{Lemma}
\title{Equivariant Hamiltonian Flows}
\author{
Danilo J. Rezende* \And Sébastien Racani\`ere* \And Irina Higgins* \And Peter Toth*
\AND
*\{danilor, sracaniere, irinah, petertoth\}@google.com
}
\begin{document}

\maketitle

\begin{abstract}
This paper introduces equivariant hamiltonian flows, a method for learning expressive densities that are invariant with respect to a known Lie-algebra of local symmetry transformations while providing an equivariant representation of the data. We provide proof of principle demonstrations of how such flows can be learnt, as well as how the addition of symmetry invariance constraints can improve data efficiency and generalisation. Finally, we make connections to disentangled representation learning and show how this work relates to a recently proposed definition.
\end{abstract}

\section{Introduction}
Learning generative models with structured latent representations is important for interpretability, data efficiency, and generalisation \cite{bengio2013representation, gens2014symmetry}. One kind of structural bias that has been shown to work well in the past is that of invariance or equivariance with respect to a group of transformations. For example, convolutional neural networks \cite{lecun1989conv} are invariant to the group of translations by construction, while disentangled representations learn equivariance to more general transformations \cite{higgins2018towards}. In order to learn such structured latent representations, however, often a trade-off has to be made in terms of latent expressivity. For example, most of the disentangling models to date use a unit Gaussian prior \cite{higgins2017betavae, kim2018factorvae, chen2018isolating}.  
Normalizing flows \cite{tabak2010density, rezende2015variational, dinh2016density, kingma2016improved, NIPS2018_7892, papamakarios2017masked, kingma2018glow} provide a simple mechanism to build expressive density estimators in problems without much domain knowledge. But it remains a challenge to embed flow-based models with domain knowledge such as data on specific manifolds \cite{gemici2016normalizing} and known symmetries or factorisation of the target density. In this paper we focus on problems where there is domain knowledge in the form of known invariances of a target density which we wish to learn. This is the typical case in many modelling problems, such as density over molecular structures with various rotational SO$(n)$ symmetries, lattice-QCD \cite{albergo2019flow} with internal gauge symmetries (e.g. U$(n)$ and SU$(n)$), etc. The challenge we address is the following: suppose that we start from a base distribution $\pi(z)$ that is invariant with respect to a group of transformations. If we transform this density via a generic normalizing flow $f(z)$, there will be no guarantees that the transformed density $p(z)=\pi(f^{-1}(z))/\det|Jac|$ will be invariant as well. 

Our main contributions are: (i) Introduce equivariant Hamiltonian flows; (ii) Propose a general algorithm for enforcing equivariance in learned Hamiltonian flows with respect to any connected Lie-Group; and (iii) Prove a simple lemma that allow us to construct invariant densities from equivariant flows. The result of this are flows that transform a simple density that is invariant with respect to the actions of a known symmetry group, into another invariant density that is arbitrarily complex. We demonstrate that learning latent representations with the known equivariance structure helps with data efficiency and generalisation. Finally, we connect this work to the recent definition of disentangled representations \cite{higgins2018towards}.
\section{Methods}
We first revise the notion of Hamiltonian dynamics from physics \citep{goldstein1980mechanics}. In physics the Hamiltonian formalism is used to model energy conserving continuous dynamics in an abstract state space $s = (q, p)$ of generalised position $q$ and momentum $p$. 
Formal connections can be made between Hamiltonian dynamics and the actions of continuous symmetry generators -- those transformations that leave the dynamics of the system modelled by the Hamiltonian unchanged throughout the path traversed. Indeed, the roles of the Hamiltonian and its symmetry generators can be interchangeable, whereby a symmetry generator of one system can be a Hamiltonian for a different system, and both Hamiltonian and symmetry transformations commute with each other. The commutativity of transformations is measured by an operator known as the Poisson bracket. Hence, we use the Poisson bracket between the Hamiltonian and the known symmetry generators as a regularizer to restrict the form of the learnt Hamiltonian flow, whereby it models the final data density well, while also being invariant to the known symmetries.

\begin{figure*}[t]
\vskip -0.3in
\begin{center}
\centerline{\includegraphics[width=\linewidth]{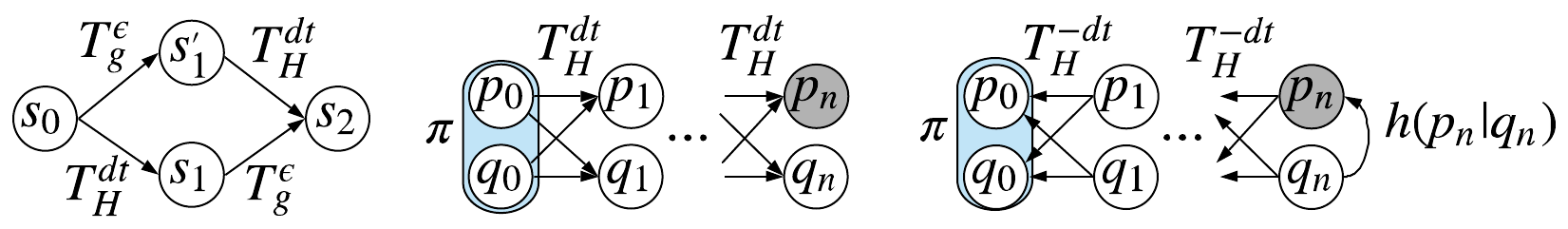}}
\caption{Model diagrams. From left to right: (i) Equivalence between flows with swapped $T_H^{dt}$ and $T_g^{\epsilon}$, as a result of the equivariance of Hamiltonian flows with respect to generators of symmetries of the Hamiltonian function; (ii) Generative model diagram. The last momentum variable $p_n$ (shaded node) is dropped during sampling; (iii) Inference mode diagram for computing the ELBO, where $h(p_n | q_n)$ is an encoder model. The last momentum variable $p_n$ is inferred during training.}
\label{fig.equivariance}
\end{center}
\vskip -0.3in
\end{figure*}

% \subsection{Hamiltonian Flows}
\paragraph{Hamiltonian Flow Generative Model}
\label{sec.hamiltonian.flows}
A Hamiltonian flow is a continuous-time normalizing flow induced by the Hamiltonian dynamics in the state-space $s = (q, p) \in \mathbb{R}^{2d}$ via the ODE $(\dot{q}, \dot{p}) = \{(q, p), ~H(q, p)\} = (\frac{\partial H}{\partial p}, -\frac{\partial H}{\partial q})$, where $\{,\}$ is a skew-symmetric differential operator known as the Poisson bracket or commutator and $H: \mathbb{R}^{2d} \rightarrow \mathbb{R}$ is a scalar function known as Hamiltonian. For two scalar functions $f$, $g$ we set $\{f(q, p),~g(q, p)\} = \sum_i \frac{\partial f(q, p)}{\partial q_i} \frac{\partial g(q, p)}{\partial p_i}
- \frac{\partial f(q, p)}{\partial p_i} \frac{\partial g(q, p)}{\partial q_i}$. If $f$ or $g$ is a vector valued function, we extend the previous definition component wise so that $\{f,~g\}$ is also a vector value function.
% The trajectories generated by \eqref{eq.hamiltonian.ode} are stationary points of the "Action Functional" $\mathcal{S} = \int dt \mathcal{L}(q, \dot{q})$, where $\mathcal{L}(q, \dot{q})$ is the Lagrangian density, which is related to the Hamiltonian via a Legendre transform $H = \frac{\partial \mathcal{L}}{ \partial \dot{q}}\dot{q} - \mathcal{L}$, \cite{arnol2013mathematical}.

In what follows, we introduce the notation $T_f^{\epsilon}(x) = x + \epsilon \{x, f(x)\}$ to indicate an infinitesimal transformation induced by the Poisson bracket with a function $f$. With this notation, the Euler discretization of the Hamiltonian ODE is written as $s_{t+dt} = T_H^{dt}(s_t) = s_t + dt \{s_t, H(s_t)\}$. We can construct a more complex flow by chaining several transformations $s' = T_{H_n}^{dt} \circ \ldots \circ T_{H_1}^{dt}(s)$. The inverse of this flow is obtained by replacing $dt$ by $-dt$ and reversing the order of the transformations,
$s = T_{H_1}^{-dt} \circ \ldots \circ T_{H_n}^{-dt}(s') + O(dt^2)$.

% \subsection{Hamiltonian Flow Generative Model}
% \label{sec.hamiltonian.generative.model}
We can create an expressive probability density $p_{\theta}(s_n)$ on $s_n=(q_n, p_n)$ by starting from an initial distribution $\pi(s_0)$, and transforming this density via the flow $s_n = T_{H_n}^{dt} \circ \ldots \circ T_{H_1}^{dt}(s_0)$, where the Hamiltonians $H_i$ are scalar functions parametrized by neural networks with parameters $\theta$. This constitutes a more structured form of Neural ODE flow, \cite{NIPS2018_7892}. There are a few notable differences between Hamiltonian flows and general ODE flows: (i) The Hamiltonian ODE is volume-preserving \footnote{
This can be seen by computing the determinant of the Jacobian of the infinitesimal transformation $T_H^{dt}$, 
$\text{det}(\text{Jac}) = \text{det} \left[
    \mathbb{I} + dt
    \left(\begin{array}{cc}
      \frac{\partial^2 H}{\partial q_i \partial p_j} & - \frac{\partial^2
      H}{\partial q_i \partial q_j}\\
      \frac{\partial^2 H}{\partial p_i \partial p_j} & - \frac{\partial^2
      H}{\partial p_i \partial q_j}
    \end{array}\right)
    \right] = 1 + dt~\text{Tr}\left(\begin{array}{cc}
      \frac{\partial^2 H}{\partial q_i \partial p_j} & - \frac{\partial^2
      H}{\partial q_i \partial q_j}\\
      \frac{\partial^2 H}{\partial p_i \partial p_j} & - \frac{\partial^2
      H}{\partial p_i \partial q_j}
    \end{array}\right) + O(dt^2) = 1 + O(dt^2)$.
}, which makes the computation of log-likelihood cheaper than for a general ODE flow. We show in \Cref{fig.results} that this does not necessarily reduce the expressivity of  $p(q_n)$ even if $p(s_n)$ may be restricted; (ii) General ODE flows are only invertible in the limit $dt \rightarrow 0$, whereas for some Hamiltonians, we can use simplectic integrators (such as Leap-Frog, \cite{neal2011mcmc}) that are both invertible and volume-preserving for any $dt > 0$.
The resulting density $p_{\theta}(s_n)$ is given by $\ln p(s_n) = \ln \pi(s_0) = \ln \pi ( T_{H_1}^{-dt} \circ \ldots \circ T_{H_n}^{-dt}(s_n) ) + O(dt^2).$

The structure $s = (q, p)$ on the state-space imposed by the Hamiltonian dynamics can be constraining from the point of view of density estimation, since this would require an artificial split of the data vectors in two disjoint sets. We could consider different mechanisms to address this: (i) Exploit the splits and use alternating masks similar to \cite{dinh2016density}; (ii) Treat the momentum variables $p$ as latent variables. The latter is the same interpretation as in HMC \cite{neal2011mcmc, salimans2015markov, levy2017generalizing}. It is also more elegant as it does not require an artificial split of the data. This results in a density $p(q_n)$ of the form $p(q_n) = \int d p_n p(q_n, p_n) = \int d p_n \pi ( T_{H_1}^{-dt} \circ \ldots \circ T_{H_n}^{-dt}(q_n, p_n)).$ This integral is intractable, but the model can still be trained via variational methods where we introduce a variational density $h_{\phi}(p_n | q_n)$ with parameters $\phi$ and optimise the ELBO,
\begin{align}
     \text{ELBO}(q_n) &= \mathbb{E}_{h_{\phi}(p_n | q_n)}[ \ln \pi ( T_{H_1}^{-dt} \circ \ldots \circ T_{H_n}^{-dt}(q_n, p_n)) - \ln h_{\phi}(p_n | q_n) ] \leq \ln p(q_n), \label{eq.elbo}
\end{align}
instead. Note that, in contrast to VAEs (\cite{kingma2013auto, rezende2014stochastic}), the ELBO \eqref{eq.elbo} is not explicitly in the form of a reconstruction error term plus a KL term.

\paragraph{Equivariant Hamiltonian Flows and Symmetries}
\label{sec.symmetry.groups}
Let's say we want to learn a density $p_{\theta}(x)$ and we also want it to be invariant with respect to a set of transformations $x' = T_{\omega}(x)$ induced by the elements $\omega$ of a symmetry group $G$. That is, $p(x')=p(T_{\omega}(x))=p(x), \forall \omega \in G$. In the following we will assume that $G$ is a connected Lie group\footnote{See \citep{kirillov2008introduction} for an introduction to Lie groups and their infinitesimal counterparts, the Lie algebras. Readers unfamiliar with these concepts should think of Lie groups as abstract equivalent of groups of matrices. For example, the group of orientation preserving rotations $\rm{SO}(n)$ is a Lie group, and its infinitesimal counterpart (its Lie algebra) is the space of anti-symmetric matrices.}. This will allow us to introduce infinitesimal transformations, and being invariant with respect to transformations in $G$ will be equivalent to being invariant with respect to those infinitesimal transformations.

The Hamiltonian formalism provides a natural language to manipulate symmetries and, in particular, it provides a connection between invariance of the Hamiltonian function to the equivariance of the Hamiltonian flow. We can parametrize the action of an infinitesimal symmetry transformation on the state-space $s=(q, p)$ by a scalar function $g(s)$ via the Poisson bracket as $s' = s + \epsilon \{s, g\}$, where $\epsilon > 0,~\epsilon << 1$. This can also be represented as $s' = T_g^{\epsilon}(s)$.
In this context, a transformation $T_g^{\epsilon}$ is defined to be an infinitesimal symmetry if $T_g^{\epsilon} \circ T_H^{dt} = T_H^{dt} \circ T_g^{\epsilon} + O(\epsilon dt^2 + \epsilon^2 dt)$. That is, we can interchange the composition order of the Hamiltonian flow by the symmetry transformation. This is illustrated in \Cref{fig.equivariance}(left). This means that all intermediate states $s_{1, \ldots, n}$ of the Hamiltonian flow will also transform in the same manner as the starting state, that is, they will form an {\it equivariant} or {\it covariant} representation of the starting state.
When $T_g^{\epsilon}$ is a symmetry, the function $g$ is called a {\it symmetry generator}.
By Noether's first Theorem \cite{noether1971invariant}, if $T_g^{\epsilon}$ is a symmetry then the generator $g$ commutes with the Hamiltonian function, that is $\{H,~g\} = 0$ \footnote{This can be easily proven by observing that  $T_g^{\epsilon} \circ T_H^{dt} - T_H^{dt} \circ T_g^{\epsilon} =  \epsilon dt \{g, H\}  + O(\epsilon dt^2 + \epsilon^2 dt)$.}.
This implies that the numerical value of the function $g$ will {\it remain constant throughout the Hamiltonian flow}, for this reason the symmetry generators $g$ are also referred to as {\it conserved charges} in physics. Based on this reasoning, we say that a Hamiltonian flow is equivariant if $\{H,~g\} = 0$.
The set of all symmetry generators is closed under the Poisson bracket. That is, if two functions $f$ and $g$ are symmetry generators then $\{f, g\}$ is a symmetry generator. This means that the set of all generators forms a {\it Lie algebra}. 

If we can express our domain knowledge about the target density as: (i) a set of symmetry generators $g_k$, $k=1,\ldots,N_g$ and (ii) an initial simple invariant density $\pi$, we can construct a new density $p$ via Hamiltonian flow that is also invariant by learning a Hamiltonian such that $\{g_k, H\}=0$. This is formalised in \Cref{lemma.main}.

\begin{lemma}\label{lemma.main} Given a Hamiltonian function $H: \mathbb{R}^{2d} \rightarrow \mathbb{R}$, a set of symmetry generators $g_k: \mathbb{R}^{2d} \rightarrow \mathbb{R}$ and a base density $\pi: \mathbb{R}^{2d} \rightarrow \mathbb{R}^+$, the density $p: \mathbb{R}^{2d} \rightarrow \mathbb{R}^+$ induced by the Hamiltonian flow $T_H^{dt}$ will be invariant with respect to the generators $g_k$ if $\{g_k,~H\}=\{g_k,~\pi\}=0,~\forall k$.
\end{lemma}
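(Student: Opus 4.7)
The plan is to combine three ingredients already laid out in the excerpt: (i) volume preservation of the Hamiltonian flow, which gives the closed-form $p(s) = \pi(T_H^{-dt}(s))$ without Jacobian corrections; (ii) the equivariance identity $T_g^{\epsilon}\circ T_H^{dt}=T_H^{dt}\circ T_g^{\epsilon}+O(\epsilon dt^2+\epsilon^2 dt)$, which is the direct consequence of $\{g_k,H\}=0$ noted just before the lemma; and (iii) the infinitesimal invariance of $\pi$, which follows from $\{g_k,\pi\}=0$ via the first-order expansion $\pi(T_{g_k}^{\epsilon}(s))=\pi(s)+\epsilon\{\pi,g_k\}+O(\epsilon^2)=\pi(s)+O(\epsilon^2)$. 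Finally, I would appeal to the fact that $G$ is connected to extend from infinitesimal invariance (i.e.\ invariance under the Lie algebra action) to invariance under all of $G$.

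Concretely, I would proceed as follows. First, invert the equivariance identity to get $T_H^{-dt}\circ T_{g_k}^{\epsilon}=T_{g_k}^{\epsilon}\circ T_H^{-dt}+O(\epsilon dt^2+\epsilon^2 dt)$, so that the inverse Hamiltonian step commutes with the symmetry action to the relevant order. Next, compute
\begin{align*}
p(T_{g_k}^{\epsilon}(s)) &= \pi\bigl(T_H^{-dt}(T_{g_k}^{\epsilon}(s))\bigr) = \pi\bigl(T_{g_k}^{\epsilon}(T_H^{-dt}(s))\bigr)+O(\epsilon dt^2+\epsilon^2 dt)\\
&= \pi(T_H^{-dt}(s))+O(\epsilon^2)+O(\epsilon dt^2+\epsilon^2 dt) = p(s)+O(\epsilon^2+\epsilon dt^2+\epsilon^2 dt),
\end{align*}
where the middle equality uses the commutation and the next uses $\{g_k,\pi\}=0$. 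Taking the continuous-time limit $dt\to 0$ (or, equivalently, iterating the chain $T_{H_n}^{dt}\circ\cdots\circ T_{H_1}^{dt}$ and summing the error terms, each of which is $O(dt^2)$ per step) kills the flow-discretisation errors, and dividing through by $\epsilon$ and letting $\epsilon\to 0$ shows that the Lie-algebra derivative of $p$ along every generator $g_k$ vanishes pointwise.

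To finish, I would invoke connectedness of the Lie group $G$: any $\omega\in G$ can be written as a finite product of exponentials of Lie-algebra elements, so vanishing of the derivative of $p$ along each generator $g_k$ is equivalent to $p\circ T_\omega=p$ for all $\omega\in G$. This is the standard correspondence between Lie-algebra invariance and Lie-group invariance.

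The main obstacle I expect is purely bookkeeping: one must be careful that the accumulated $O(dt^2)$ errors from each of the $n$ chained Hamiltonian steps, combined with the $O(\epsilon^2)$ error from the single symmetry step, remain negligible in the limit that defines the continuous-time flow, and that when the lemma is stated for several $g_k$ simultaneously, all pairs of generators and all Hamiltonian steps commute to sufficient order. Provided we work in the proper continuous-time limit (where $T_H^{dt}$ is interpreted as the flow of the Hamiltonian ODE rather than a fixed-step Euler update), these errors disappear and the argument is clean; if one insists on a strict discrete setting, one must restrict to exactly volume-preserving and equivariance-preserving integrators (e.g.\ symplectic ones) and verify the commutation identity holds exactly, not merely to leading order.
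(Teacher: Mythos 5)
Your proof is correct and takes essentially the same route as the paper: $\{g_k,H\}=0$ together with Noether's theorem gives equivariance of the flow, and $\{g_k,\pi\}=0$ gives first-order invariance of the base density, which combine to give invariance of the pushed-forward density. You additionally spell out the steps the paper's two-line proof leaves implicit (commuting $T_H^{-dt}$ past $T_{g_k}^{\epsilon}$ inside $\pi$ using volume preservation, the bookkeeping of the $O(dt^2)$ and $O(\epsilon^2)$ errors, and the passage from Lie-algebra invariance to invariance under the connected group), which only strengthens the argument.
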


\begin{proof}
\vskip -0.1in
From Noether's theorem, $\{g_k, H\}=0$ implies the equivariance of the flow $T_g^{\epsilon} \circ T_H^{dt} = T_H^{dt} \circ T_g^{\epsilon} + O(\epsilon dt^2 + \epsilon^2 dt)$. It remains to prove that $\pi(T_g^{\epsilon}(s))=\pi(s) + O(\epsilon^2)$. Expanding to first order in $\epsilon$, we have $\pi(T_g^{\epsilon}(s)) = \pi(s) + \epsilon \nabla \pi^T\{s, g_k(s)\} + O(\epsilon^2) = \pi(s) + \epsilon \{\pi, g_k(s)\} + O(\epsilon^2) = \pi(s) + O(\epsilon^2)$.
\end{proof} 

\vskip -0.1in
We can enforce $\{g_k,~H\}=0$ via constrained optimisation, where we perform a min-max optimisation of the Lagrangian $L(\theta, \phi, \lambda)$ instead,
\begin{align}
    (\theta^{\star}, \phi^{\star}, \lambda^{\star}) &= \min_{\theta, \phi}\  \max_{\lambda \geq 0}\ L(\theta, \phi, \lambda) \nonumber; \quad 
    L(\theta, \phi, \lambda) &= -\sum_{x \in \text{Data}} \text{ELBO}(x) + \sum_k \lambda_k \mathbb{E}_{\pi} [\{g_k(s), H(s)\}^2 - \kappa],
    \label{eq.elbo.constrained.generators}
\end{align}
where $\lambda_k$ is a Lagrange multiplier for the $k$th generator and $\kappa$ controls the precision of the constraint.

In lemma \ref{lemma.main}, we found how to get the density over the full state $s$ to be invariant under the symmetries. What we want though, is to have the marginal distribution over $q$ to be invariant. This will happen when the Hamiltonian $H(q, p)$ is factored as a "kinetic" $K(p)$, and "potential" $U(q)$ energy terms, $H(q, p) = K(p) + U(q)$.\footnote{This condition puts a constraint on the Hamiltonian to ensure that any learnt generator will induce an action on $q$ such that the marginal distribution is invariant under that action. Alternatively, we could have put a constraint on the type of generators (namely only consider those $g$ such that $\frac{\partial^2g}{\partial p_i\partial p_j}=0,~\forall i,j$) such that any learnt Hamiltonian will do. This condition is also satisfied in our experiments.}

%\begin{figure*}[t]
%\vskip -0.05in
%\begin{center}
%\centerline{\includegraphics[width=\linewidth]{}}
%\caption{Multimodal Density Learning with Hamiltonians $H(q, p) = K(p) + U(q)$. From left to right: (i) KDE estimator of target density; (ii) KDE estimator of the learned density; (iii) Learned kinetic energy function $K(p)$; (iv) Learned potential energy function $U(q)$; (v) Single Leap-Frog step; (vi) Integrated flow. The potential energy (iv) learned a multiple attractors, with basin of attraction centred at the modes of the data. The attractors can also be seen clearly at the integrated flow plot (vi), indicating the motion of points in the state-space along the flow.}
%\end{center}
%\vskip -0.3in
%\end{figure*}

% \vspace{-0.1in}
{\bf Connections to disentangling: } 
Given a decomposition of a continuous group into a direct product of subgroups, the actions of the generators of these subgroups can be modelled by independent generators $T^{\epsilon}_w$ and $T^{\epsilon}_g$ such that $\{T^{\epsilon}_w, T^{\epsilon}_g\}=0$. 
By definition in \cite{higgins2018towards}, these will act on independent subspaces of a disentangled $q$. Since equivariant flows preserve the action of the generators, they will preserve the disentanglement of the representation. 
\vspace{-0.15in}

\section{Experiments}

% All shown KDE plots used 1000 samples and isotropic Gaussian kernel bandwidth of $0.3$. The learning curves are averages over 16 random seeds and confidence region shows 90\% confidence interval.
For all experiments the Hamitonian was of the form $H(q, p) = K(p) + U(q)$. The kinetic energy term $K$ and the potential energy term $U$ are soft-plus MLPs\footnote{Due to the Poisson bracket, optimisation of Hamiltonian flows involves second-order derivatives of the MLPs used for Hamiltonians and generators, so relu non-linearities are not suitable.} with layer-sizes $[d, 128, 128, 1]$ where $d$ is the dimension of the data. The encoder network was parametrized as $h(p|q) = N(p; \mu(q), \sigma(q))$, where $\mu$ and $\sigma$ are relu MLPs with size $[d, 128, 128, d]$. The Hamiltonian flow $T_H^{dt}$, was approximated using a Leap-Frog integrator \cite{neal2011mcmc} since it preserves volume and is invertible for any $dt$. We found that only two Leap-Frog steps where sufficient for all our examples. Parameters were optimised using Adam \cite{kingma2014adam} (learning rate 3e-4) and Lagrange multipliers were optimised using the same method as \cite{rezendegeneralized}. The initial density $\pi$ was chosen separately for each dataset.

{\bf Domain knowledge via generators: }In this experiment, we test the main idea of this paper where we want to learn a target density which is symmetric with respect to the SO(2) group of 2D rotations. In \Cref{fig.results}A we show the target density. The Lie-algebra of the SO(2) group has a single generator which we express as $g(q, p) = q_1 p_2 - q_2 p_1$. For this experiment, the base density is a spherical normal $\pi(s)=N(s;0, \mathbb{I})$ since it has $SO(2)$ symmetry.
Our results demonstrate that the inclusion of the symmetry constraint increases the data-efficiency of the model in the infinite data regime  and, most importantly, substantially reduces overfitting in the finite data-regime.

\begin{figure*}[t]
\vskip -0.3in
\begin{center}
\centerline{\includegraphics[width=\linewidth]{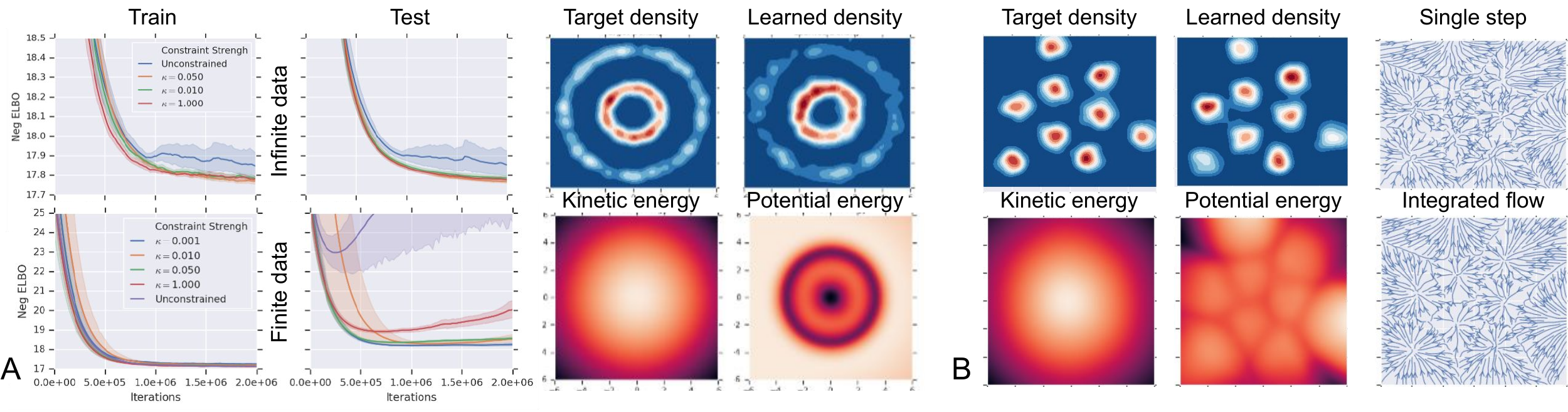}}
\caption{Learning with Hamiltonians $H(q, p) = K(p) + U(q)$.
\textbf{A}: Learning a density with known SO(2) symmetry. First two columns: training and test log-likelihoods of models with different constraint precision $\kappa$ in the regime of infinite and finite data. Adding the symmetry constraint increases both train and test data efficiency. In the finite data regime it prevents over-fitting. Second two columns: KDE estimators of the target and learnt densities; learned kinetic energy $K(p)$ and potential energy $U(q)$. 
\textbf{B}: Multimodal density learning. First two columns: KDE estimators of the target and learnt densities; learned kinetic energy $K(p)$ and potential energy $U(q)$. Last column: single Leap-Frog step and integrated flow. The potential energy learned multiple attractors, also clearly visible in the integrated flow plot. The basins of attraction are centred at the modes of the data. 
\label{fig.results}}
\end{center}
\vskip -0.35in
\end{figure*}

{\bf Multimodal density Learning: } In this experiment we to assess the expressivity of Hamiltonian flows. We demonstrate in \Cref{fig.results}B that it can transform a soft-uniform\footnote{The soft-uniform density $\pi(s; \sigma, \beta) \propto f(\beta (s + \sigma \frac{1}{2})) f(-\beta (s - \sigma \frac{1}{2}))$, where $f$ is the sigmoid function was chosen to make it easier to visualise the learned attractors. The experiment also work if we start from other densities such as a Normal.} $\pi(s; \sigma, \beta)$ into a new density with arbitrary number of modes. Furthermore, the resulting model is nicely interpretable: The learned potential energy $U(q)$ learned to have several local minima, one for each mode of the data. As a consequence, the trajectory of the initial samples through the flow has attractors at the modes of the data.
\section{Conclusions}
We have demonstrated an effective method for learning invariant probability densities using Hamiltonian flows. The proposed method opens the doors to many potential applications of ML in physics, which we plan to explore in future work.

\clearpage
\bibliographystyle{unsrt}
\bibliography{bibliography}

\end{document}